\newcommand{\defeq}{\vcentcolon=}
\def\*#1{\bm{#1}}
\newtheorem{definition}{Definition}
\newtheorem{proposition}{Proposition}
\definecolor{mydarkblue}{rgb}{0,0.08,0.45}
\newcommand{\figref}[1]{\hyperref[#1]{Figure \ref*{#1}}}
\title{Strong Log-Concavity Does Not Imply Log-Submodularity}
\author{%
  Alkis Gotovos \\
  MIT CSAIL \\
  \texttt{alkisg@mit.edu}
}
\begin{document}

\maketitle

\begin{abstract}
  We disprove a recent conjecture regarding discrete distributions and their generating polynomials stating that strong log-concavity implies log-submodularity.
\end{abstract}

\section{Introduction}
Given a ground set $V = \{1,\ldots, n\}$, we consider distributions $p : 2^n \to \mathbb{R}$ over subsets of $V$, or equivalently, distributions over $n$ binary random variables, which can be represented by the corresponding multi-affine generating polynomial $g_p$ in $n$ variables $x_1,\ldots,x_n \in \mathbb{R}$ as follows,
\begin{align*}
g_p(\*x) = \sum_{S \subseteq [n]} p(S)\*x^S,
\end{align*}
where $\*x = (x_1,\ldots,x_n)$ and $\*x^S \defeq \prod_{v \in S}x_i$.
In what follows, we will use $\partial_{x_i} g$ to denote the partial derivative of $g$ with respect to $x_i$.

\vspace{1em}
\begin{definition}[\citealp{gurvits10,anari19,branden19}]
A polynomial $g$ is called strongly log-concave (SLC) if, for any $k \geq 0$ and any integer sequence $1 \leq i_1,\ldots,i_k \leq n$, the derivative $\partial x_{i_1}\cdots \partial x_{i_k} g$ is log-concave on $\mathbb{R}_{>0}^n$.
\end{definition}

\vspace{1em}
\begin{definition}
A distribution $p$ is called log-submodular, or equivalently, is said to satisfy the negative lattice condition (NLC), if for any $S, T \subseteq V$,
\begin{align*}
p(S)p(T) \geq p(S\cup T)p(S \cap T).
\end{align*}
We will call a generating polynomial $g_p$ log-submodular if the corresponding distribution $p$ is log-submodular.
\end{definition}

Recently, \cite{robinson19} conjectured that strong log-concavity implies log-submodularity for any generating polynomial.
We present here a counterexample that disproves this conjecture.

\section{Counterexample}
Consider the discrete distribution $p$ over ground set $V = \{1, 2, 3\}$, represented by the following generating polynomial,
\begin{align}
g_p(x, y, z) = \frac{1}{22}(4 + 3(x+y+z) + 3(xy + xz + yz)), \label{eq:poly}
\end{align}
for all $x, y, z \in \mathbb{R}$.

\begin{proposition}
The polynomial $g_p$ in \eqref{eq:poly} is not log-submodular.
\end{proposition}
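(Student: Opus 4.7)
The plan is to read the probability mass function directly off the coefficients of $g_p$, observe that the distribution is fully symmetric in the three ground-set elements, and then exhibit a single pair $(S,T)$ for which the log-submodularity inequality $p(S)p(T)\ge p(S\cup T)p(S\cap T)$ is violated.

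Concretely, the first step is to match each monomial in \eqref{eq:poly} with its corresponding subset: we read $p(\emptyset)=4/22$, $p(\{i\})=3/22$ and $p(\{i,j\})=3/22$ for all singletons and pairs, and $p(\{1,2,3\})=0$ because there is no $xyz$ term. Because the distribution is symmetric, we only need to try a small number of essentially different $(S,T)$ configurations: $(S,T)$ with $S\cap T=\emptyset$, $(S,T)$ with $|S\cap T|=1$, and so on. The pair $S=\{1\},\ T=\{2\}$, which gives $S\cap T=\emptyset$ and $S\cup T=\{1,2\}$, is the natural first candidate because both factors on the right-hand side are then supported and comparatively large.

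The verification step is a one-line arithmetic check: $p(S)p(T)=\tfrac{3}{22}\cdot\tfrac{3}{22}=\tfrac{9}{484}$, while $p(S\cup T)p(S\cap T)=\tfrac{3}{22}\cdot\tfrac{4}{22}=\tfrac{12}{484}$. Since $9/484<12/484$, the NLC inequality fails for this choice, which by definition shows that $g_p$ is not log-submodular.

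There is no genuine obstacle in the argument; the only mild subtlety is guessing the right witness pair. Choices with $\{1,2,3\}\subseteq S\cup T$ are automatically uninformative because $p(\{1,2,3\})=0$ makes the right-hand side vanish and the inequality trivially hold, so one should rule those out first and focus on pairs where both $S\cup T$ and $S\cap T$ have positive mass, which quickly points to two disjoint singletons.
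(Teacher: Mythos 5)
Your proof is correct and uses exactly the same witness as the paper: the pair $S=\{1\}$, $T=\{2\}$, for which $p(S)p(T)=\tfrac{9}{484}<\tfrac{12}{484}=p(S\cup T)p(S\cap T)$. The additional discussion of how to locate the witness is fine but not needed; the argument matches the paper's proof.
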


\begin{proof}
For $S = \{1\}$ and $T = \{2\}$, we have
\begin{align*}
p(S)p(T) = p(\{1\})p(\{2\}) = \frac{3}{22}\cdot \frac{3}{22} < \frac{3}{22}\cdot \frac{4}{22} = p(\{1, 2\})p(\varnothing) = p(S\cup T)p(S\cap T).
\end{align*}
\end{proof}

\begin{proposition}
The polynomial $g_p$ in \eqref{eq:poly} is strongly log-concave.
\end{proposition}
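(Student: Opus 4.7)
The plan is to verify directly that every iterated partial derivative of $g_p$ is log-concave on $\mathbb{R}_{>0}^3$. Because $g_p$ is multilinear (no $x^2, y^2, z^2$ terms), any derivative $\partial_{x_{i_1}}\cdots\partial_{x_{i_k}} g_p$ whose index sequence contains a repeated index vanishes identically, and in particular every derivative of order $k \geq 3$ is the zero polynomial (which is log-concave by convention). Using the symmetry of $g_p$ under permutations of $(x,y,z)$, I therefore only need to check one representative from each orbit of derivatives with distinct indices, leaving just four cases indexed by $k \in \{0,1,2,3\}$.

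The cases $k \geq 1$ are straightforward. For $k=3$: $\partial_x\partial_y\partial_z g_p = 0$. For $k=2$: $\partial_x\partial_y g_p = 3/22$, a positive constant. For $k=1$: $\partial_x g_p = \tfrac{3}{22}(1+y+z)$ is a positive affine function, whose logarithm has rank-one negative-semidefinite Hessian and is hence concave.

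The only nontrivial case is $k=0$: showing $\log g_p$ is concave on $\mathbb{R}_{>0}^3$. Setting $h \defeq 22\, g_p$ (a positive scalar factor is immaterial to log-concavity), this is equivalent to the symmetric matrix
\[
N(x,y,z) \;\defeq\; (\nabla h)(\nabla h)^{T} - h\,\nabla^2 h
\]
being positive semidefinite on $\mathbb{R}_{>0}^3$, where $\nabla h = 3(1+y+z,\,1+x+z,\,1+x+y)^T$ and $\nabla^2 h$ is the $3 \times 3$ matrix with $0$s on the diagonal and $3$s off the diagonal. I will check PSD via the principal-minor criterion. The $1 \times 1$ minors are the squares $(\partial_{x_i} h)^2$. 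By the $(x,y,z)$-symmetry the three $2 \times 2$ principal minors are permutations of one another, and one of them equals $3h\,(2\,\partial_x h\, \partial_y h - 3h)$; substituting the explicit affine forms should give a polynomial in $x,y,z$ with all non-negative coefficients. The main obstacle is the $3 \times 3$ determinant: I expect cofactor expansion plus substantial cancellation to reduce $\det N$ to $9h^2\bigl[\,4(AB+AC+BC) - (A+B+C)^2 - 6h\,\bigr]$ with $A, B, C \defeq \partial_x h,\, \partial_y h,\, \partial_z h$, and substituting the explicit forms of $A, B, C$ and $h$ should then collapse the bracket to a polynomial with all non-negative coefficients on $\mathbb{R}_{\geq 0}^3$. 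Combining the three sign conditions yields $N \succeq 0$ and hence the desired log-concavity of $g_p$.
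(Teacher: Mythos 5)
Your argument is correct, and it shares the same skeleton as the paper's proof: enumerate the iterated derivatives, dispose of orders $k \geq 1$ by multilinearity (positive constants and positive affine functions are log-concave, and derivatives with a repeated index or of order $\geq 3$ vanish), and reduce the $k=0$ case to positivity of $N = (\nabla h)(\nabla h)^T - h\,\nabla^2 h$, which is exactly a positive scalar multiple of the matrix $\*R$ in the paper (indeed $\nabla^2 \log h = -N/h^2$). Where you genuinely diverge is in how that positivity is certified. The paper observes that $\*R$ is strictly diagonally dominant with positive diagonal and invokes a Gershgorin-type theorem, which needs only the row sums and no determinants. You instead use the all-principal-minors criterion for positive semidefiniteness, which forces you through the $3\times 3$ determinant; I have checked that the identities you anticipate do hold: with $A,B,C$ the partials of $h$, the $\{1,2\}$ principal minor is $3h(2AB-3h)$ with $2AB-3h = 6+9x+9y+27z+9(xy+yz+zx)+18z^2$, and the rank-two-update structure $N = 3hI + vv^T - 3h\mathbf{1}\mathbf{1}^T$ (where $v=(A,B,C)^T$) gives $\det N = 9h^2\bigl[2(AB+BC+CA)-(A^2+B^2+C^2)-6h\bigr] = 9h^2\bigl(3+18(x+y+z)+18(xy+yz+zx)\bigr)$, which agrees with your bracket and is manifestly positive on $\mathbb{R}_{\geq 0}^3$. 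The diagonal-dominance route is computationally lighter; yours costs more algebra but produces an explicit nonnegative-coefficient certificate, in fact establishes strict positive definiteness (all leading minors are positive), and would transfer directly to the parametric family of Section 3. A further small point in your favor: you correctly treat the mixed second-order derivatives as the positive constant $3/22$, whereas the paper's statement that derivatives of order $\geq 2$ are ``identically zero'' is a minor slip (only those with a repeated index, or of order three, vanish).
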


\begin{proof}
We first show that $g_p$ is log-concave. The hessian $\*H = [h_{ij}] = \nabla^2 \log g_p$ is
\begin{align*}
\*H(x, y, z) = -\frac{1}{3 \left(g_p(x, y, z) \right)^2} \*R(x, y, z),
\end{align*}

where
\begin{align*}
\*R(x, y, z) =
\begin{bmatrix} 
3(y + z + 1)^2 & 3z^2 + 3z - 1 & 3y^2 + 3y - 1 \\[0.5em]
3z^2 + 3z - 1 & 3(x + z + 1)^2 & 3x^2 + 3x - 1 \\[0.5em]
3y^2 + 3y - 1 & 3x^2 + 3x - 1 & 3(x + y + 1)^2
\end{bmatrix}.
\end{align*}

We will show that $\*H$ is negative definite for all $x, y, z \in \mathbb{R}_{>0}^3$ by showing that $\*R$ is positive definite for all $x, y, z \in \mathbb{R}_{>0}^3$.
Note that
\begin{align*}
|r_{12}| + |r_{13}| &= |3z^2 + 3z - 1| + |3y^2 + 3y - 1|\\
                    &\leq 3z^2 + 3z + 1 + 3y^2 + 3y + 1\\
                    &< 3z^2 + 3y^2 + 6yz + 6y + 6z + 3 = |r_{11}|.
\end{align*}
Completely analogously, it is easy to see that $|r_{21}| + |r_{23}| < |r_{22}|$ and $|r_{31}| + |r_{32}| < |r_{33}|$.
Therefore, $\*R$ is strictly diagonally dominant.
Since $r_{ii} > 0$, for $i \in \{1, 2, 3\}$, it follows that $\*R$ is positive definite (see Theorem 6.1.10 of \cite{horn12}).

It remains to show that any derivative of $g_p$ is log-concave.
Derivatives of order $\geq 2$ are identically zero, therefore trivially log-concave.
For the first-order derivative $\partial_{x} g_p$, we have
\begin{align*}
\partial_{x} g_p(x, y, z) = \frac{1}{22}(3 + 3y + 3z),
\end{align*}
and
\begin{align*}
\nabla^2(\log \partial_{x} g_p)(x, y, z) = -\frac{1}{(y + z + 1)^2}\*W,
\end{align*}
where
\begin{align*}
\*W =
\begin{bmatrix} 
0 & 0 & 0 \\[0.5em]
0 & 1 & 1 \\[0.5em]
0 & 1 & 1
\end{bmatrix}.
\end{align*}
It is easy to see that $\*W$ has eigenvalues $\lambda_1 = 2$ and $\lambda_2 = \lambda_3 = 0$, therefore $\partial_x g_p$ is log-concave.
Showing log-concavity for $\partial_y g_p$ and $\partial_z g_p$ is completely analogous.
\end{proof}

\section{Illustration}
We consider the parametric family of discrete distributions represented by generating polynomials of the form
\begin{align}
\hat{g}_p(x, y, z) = \frac{1}{4 + 3b + 3c}(4 + b(x+y+z) + c(xy + xz + yz)), \label{eq:polyfam}
\end{align}
for all $x, y, z \in \mathbb{R}$ and $b, c \in \mathbb{R}_{\geq 0}$.

The counterexample presented in the previous section is obtained for $b = c = 3$.
In \figref{fig:slc} we show a simulated approximation of the indicator functions of strong log-concavity and log-submodularity for the above family of distributions.

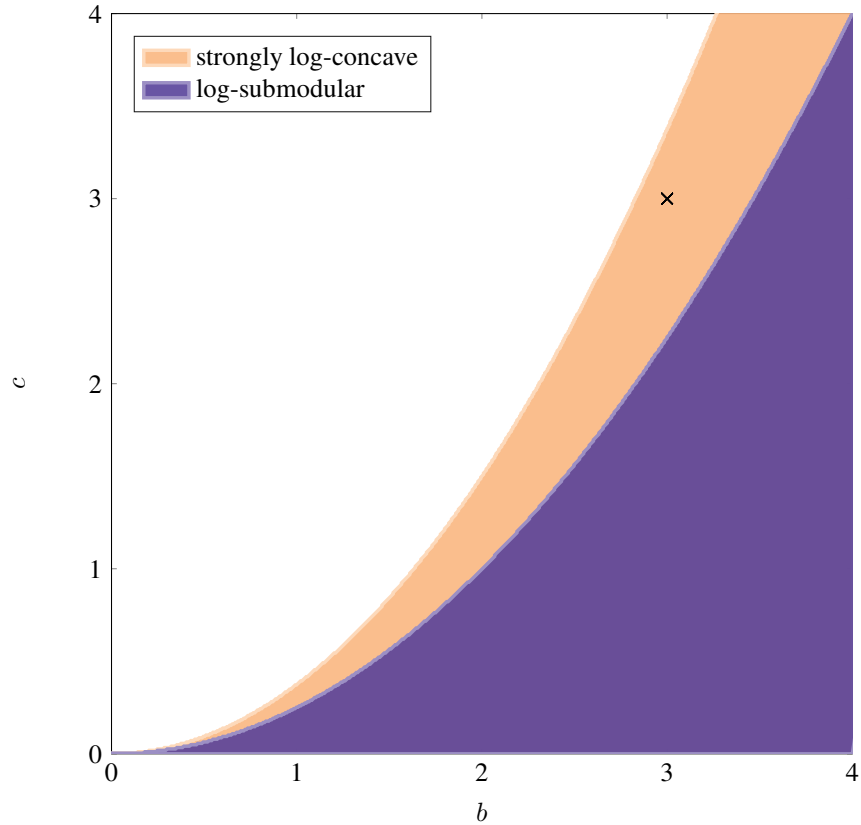
\begin{figure}[tb]
  \centering
\begin{tikzpicture}

\colorlet{col1}{Apricot}
\colorlet{col2}{Violet}

\begin{axis}[%
tick label style={font=\normalsize},
label style={font=\normalsize},
legend style={font=\normalsize},
view={0}{90},
width=4.5in,
height=4.5in,
xmin=0, xmax=4,
xtick={0, 1, 2, 3, 4},
xticklabels={0, 1, 2, 3, 4},
xlabel={$b$},
ymin=0, ymax=4,
ytick={0, 1, 2, 3, 4},
yticklabels={0, 1, 2, 3, 4},
ylabel={$c$},
major tick length=2pt,
legend cell align=left,
clip marker paths=true,
legend style={at={(0.03,0.97)},draw=black!90!white,row sep=0,anchor=north west}
]

\path[name path=axis] (axis cs:0,0) -- (axis cs:4,0);

\addplot [
name path=fslc,
draw=none,
forget plot
] table {slc.txt};

\addplot [
name path=fnlc,
draw=none,
forget plot
] table {nlc.txt};

\addplot [
draw=col1!50!white,
line width=1.3,
fill=col1,
fill opacity=0.9
]
fill between[
of=fslc and axis,
];
\addlegendentry{strongly log-concave}

\addplot [
draw=col2!50!white,
line width=1.3,
fill=col2, 
fill opacity=0.9
]
fill between[
of=fnlc and axis,
];
\addlegendentry{log-submodular}

\addplot [
draw=black,
mark=x,
mark size=3,
forget plot
] (3,3);

\end{axis}
\end{tikzpicture}
  \caption{A depiction of the parameter ranges for which the resulting distribution in \eqref{eq:polyfam} is strongly log-concave or log-submodular.
  For this family of distributions it is clear that the region of strong log-concavity is a strict superset of the region of log-submodularity.
  The cross indicates the location of the counterexample discussed previously.}
  \label{fig:slc}
\end{figure}

\bibliography{note}
\bibliographystyle{icml2019}

\end{document}